\newtheorem{theorem}{Theorem}
\newtheorem{definition}[theorem]{Definition}
\newtheorem{assumption}[theorem]{Assumption}
\newtheorem{example}[theorem]{Example}
\newcommand{\m}{\Theta} % model space
\newcommand{\calx}{\mathcal{X}}
\newcommand{\calu}{\mathcal{U}}
\newcommand{\calv}{\mathcal{V}}
\newcommand{\N}{\mathcal{N}} % Gaussian distribution
\newcommand{\tol}{\omega}
\newcommand{\R}{\mathbb{R}}
\newcommand{\1}{{\mathbf 1}}
\newcommand{\bfA}{\mathbf{A}}
\newcommand{\bfB}{\mathbf{B}}
\newcommand{\bfb}{\mathbf{b}}
\newcommand{\bfc}{\mathbf{c}}
\newcommand{\bfE}{\mathbf{E}}
\newcommand{\bfv}{\mathbf{v}}
\newcommand{\bfw}{\mathbf{w}}
\newcommand{\bfx}{\mathbf{x}}
\newcommand{\bfX}{\mathbf{X}}
\newcommand{\bfz}{\mathbf{z}}
\newcommand{\bfR}{\mathbf{R}}
\newcommand{\bfu}{\mathbf{u}}
\newcommand{\bfV}{\mathbf{V}}
\newcommand{\bfQ}{\mathbf{Q}}
\newcommand{\bfM}{\mathbf{M}}
\newcommand{\pv}{\mathcal{S}}
\newcommand{\spt}{\mathbf{s}}
\newcommand{\maxsafeball}{\mathsf{MaxSafeBall}}
\newcommand{\safeball}{\mathsf{SafeBall}}
\newcommand{\svd}{\mathsf{SVD}}
\newcommand{\sact}{\mathbf{u}^*}
\DeclareMathOperator*{\Exp}{\mathbb E}
\DeclareMathOperator*{\Var}{\mathsf{Var}}
\newcommand{\bfI}{\mathbf{I}}
\newcommand{\veps}{\varepsilon}
\newcommand{\bfxi}{\boldsymbol{\xi}}
\newcommand{\commentout}[1]{}
\title{Safe Exploration for Identifying Linear Systems via Robust Optimization}
\author{Tyler Lu \and Martin Zinkevich \and Craig Boutilier \and Binz Roy \and Dale Schuurmans\\Google}
\begin{document}

\date{}
\maketitle

\begin{abstract}
Safely exploring an unknown dynamical system is critical to the
deployment of reinforcement learning (RL) in physical systems where
failures may have catastrophic consequences. In scenarios where one knows
little about the dynamics, diverse transition data covering
relevant regions of state-action space is needed to apply either 
model-based or model-free RL. Motivated by the cooling
of Google's data centers,
we study how one can safely identify the parameters of a
system model with a desired accuracy and confidence level. In
particular, we focus on learning an unknown linear system with
Gaussian noise assuming only that, initially,
a \emph{nominal safe action} is known.
Define safety as satisfying specific linear constraints on the state space
(e.g., requirements on process variable) that must hold over the span of
    an entire trajectory, and given a Probably Approximately
    Correct (PAC) style bound on the estimation error
of model parameters, we show how to compute safe regions of action
space by gradually growing a ball around the nominal safe action. One
can apply \emph{any exploration strategy} where actions are chosen from
such safe regions. Experiments on a stylized model of data center
cooling dynamics show how computing proper safe regions can
increase the sample efficiency of safe exploration.
\end{abstract}

\section{Introduction}
\label{sec:intro}

Applying reinforcement learning (RL) to control physical systems in
complex, failure-prone environments
% in which they operate in 
is non-trivial. The advantages of RL is significant: deploying an
agent that is able to learn and optimally control a sophisticated
physical system reduces the amount of manual tuning, hard-coding
and guess work that goes in to designing good controllers in many
applications.

However, a key problem in an unknown system is that the RL agent must
explore to learn the relevant aspects of the dynamics for optimal
control. Furthermore, in a failure-sensitive environment, the agent
must explore without triggering unsafe events. This is known as the
\emph{safe exploration} problem and has recently gained prominence in
both the research community and the broader public
consciousness~\cite{garcia:jmlr2015,amodei:2016,pecka:mesas14}.  This
interest has also been driven by some recent successes of RL over
traditional control regimes for physical systems such as those for
controlling pumps, chillers and cooling towers for data
centers~\cite{gao:2014}.

In this paper, we focus a critical aspect of model-based RL
control, namely, \emph{system identification subject to
safety constraints}. Specifically, working with linear Gaussian 
dynamics models---motivated by a data center cooling system control
at Google\footnote{We discuss this motivation in detail below.}---we 
define \emph{safety} w.r.t.\ a domain-specific
set of linear constraints over state variables that must be
satisfied at all times. This is the common notion of safety in
industrial applications where certain \emph{process variables} (system state
variables being monitored or controlled) should
meet certain target set points, and must not diverge more than a fixed
amount from those points. Once a system dynamics model is learned
to a sufficient degree, it can be used
in a model-based RL or other control algorithm. 

Unlike recent work that takes a Bayesian approach to modelling system
uncertainty, we adopt a strict uncertainty perspective, and
reason with a set of models within which the true system
model must lie with high probability.  The set itself is defined using
PAC-style bounds derived from an exploration data set over
state-action space. With this perspective,
we compute safe regions of action space where, starting with an 
initial state, one can take any sequence of actions within the
safe region and guarantee satisfaction of the (state)
safety constraints on the states at every step of the trajectory.

Our approach assumes the existence of a \emph{nominal safe action}, namely,
an action that is safe in the sense that it, if continuously applied,
will eventually drive the system to a safe state no matter what the initial
state, and will maintain safety in steady state.
A safe region is then computed by gradually
growing a ball around the nominal action, or some other action known
to be safe. Our experiments use a modified version of the
linear-Gaussian dynamics of a data center's cooling system. We 
show how our algorithms can be used to
efficiently and safely explore while also computing high probability
bounds on the estimated model quality.

\section{Related Work}

Given our definition of safety, perhaps the most relevant work is that
of Akametalu et al.~\cite{tomlin:cdc2014}, who require that, during
exploration, the states at every time step must reside within a
pre-defined region of safety.  They focus on computing a subset of the
safe states $D$ with the property that, when starting from any initial
state in $D$, one can execute any exploration policy as long as
certain safe actions are taken when the state starts drifting out of
$D$. However, computation of $D$ scales exponentially in the state
dimension. The transition dynamics are assumed to be partially known
but a deterministic disturbance function is unknown and modeled as a
Gaussian process (GP). While their safety definition bears a close
resemblance to ours, we determine safe regions of action space as a
function of the initial state rather than computing safe initial
states from which to explore. Furthermore, we do not assume any prior
knowledge of the dynamics model and instead treat model uncertainty
with PAC-like bounds on the quality of least-squares parameter
estimates.  Safe exploration with a finite state space, known
deterministic transition and an uncertain safety function modeled with
a GP is studied by Turchetta et al.~\cite{tbk:nips2016}, who develop
an exploration algorithm with provable safety guarantees. Several
other lines of work also use GPs for safe
exploration~\cite{btsk:arxiv2017,sui:icml2015,bs:ecc2015}.  Our
approach is not Bayesian, and does not explicitly encounter
computational complexity issues often associated with Bayesian
inference.

Our approach, using robust optimization of a worst-case safety violation,
is somewhat related to robust MDPs (RMDPs)
\cite{nilim:or05,iyengar-robustDP:mathOR05}.  In RMDPs one assumes the
true model lies in a smaller model set $\m$ defined by an error upper
bound on the transition probabilities, then the goal is to optimize a
policy that performs best with respect to an adversarial choice of the
true transition model in $\m$. We also work with a model set $\m$
where the true model lies with high probability but instead of
optimizing a policy (where one can also encode safety constraints) we
are interested in finding safe action regions. Recent work
\cite{gpc:nips16} considers the RMDP setting where the safety goal
is to find a policy that performs better than some baseline
policy, with most of the analysis in the finite-state setting, and methods
that do not easily scale to continuous, multi-dimensional
state spaces. Similar work~\cite{garcia:jair2012,kw:ei2011} also tries to
improve on a baseline policy. We do not assume a safe baseline policy
exists, but instead assume a known safe action (see above).
% ---when applied for enough
% time steps, will reach a steady state that is guaranteed to be safe.

System identification is a well-studied topic in control
theory~\cite{pintelon:2012,ljung:1999,astrom:1971}. Typically it
consists of  three phases: (1) data gathering; (2) model class selection;
and (3) parameter estimation~\cite{ljung:1999}. In this work,
we primarily deal with in safe exploration for data gathering 
and parameter estimation in the context, quantifying the
approximation quality of parameter estimates and using this as a
stopping criterion for exploration. With regard to model class selection, 
various model types are studied in the literature, including
ARX (autoregressive with exogenous inputs), ARMA
(autoregressive moving average) and ARMAX (autoregressive moving
average with exogeneous inputs)~\cite{box:2015}. Our paper focuses on
learning linear ARX models as opposed to non-linear models, such as
neural networks~\cite{chen:ijc90}. For exploration, the celebrated
result on \emph{persistently exciting} inputs~\cite{astrom:1966}
asserts that control input signals that are rich enough---having a
certain number of distinct frequencies---provide sufficiently diverse
training data to learn the true model.  We leverage this insight in
our work.

In the context of robust parameter estimation, there is a line of work
on bounded noise models in system
identification~\cite{witsenhausen:tac68,schweppe:tac68,wpl:mcs90}. This
paradigm does not assume any statistical form of the noise inherent in
state observations, but instead assumes only that the noise is bounded
by an unknown constant. Researchers have studied the existence and
characteristics of estimators for various optimality
concepts~\cite{milanese:automatica91}.  In our setting, we assume a
statistical distribution for the noise model, but we compute robust,
safe actions with respect to the uncertainty in the model parameters
estimated from trajectory data.

\section{Preliminaries}

We first describe some basic concepts and notation related to
sequential decision making. Since our paper's main results center on
linear transition models with Gaussian noise, we address
this model class in some detail and outline some corresponding PAC-style bounds
for estimating the parameters of these models. These bounds depend on
properties of the training data---generally, a time series of system
behaviour---and thus do not require that training examples be
independently generated.

\subsection{Background and Notation}

%TODO: can extract ell-th row by left multiply [0 ... 0 1 0 ... 0]
%where 1 is in ell-th element

%\cebcomment{I deleted (commented out) the first para; not sure it
%was needed and felt a bit out of place if it is. Take a look in tex source.}
%
% Since our goal in this work is to use safe exploration for system
% identification, we omit defining the typical reward function and
% discount factors of the underlying Markov Decision Process. In fact,
% since we focus on safe open-loop exploration, we will not analyze
% safety of \emph{policies}, i.e., mappings of states to actions, and
% instead focus on fixed sequences of actions that are actuated
% independently of how the state evolves.

Let $\calx \subseteq \R^d$ be the \emph{state space} and
$\calu\subseteq \R^{d'}$ the \emph{action space}. We generally view
each dimension of the state as representing some process variable to
be monitored or controlled, and each action dimension as some control
variable to be set.
For example, the
state might be a vector of sensor readings while the action might be a
vector of hardware parameter configurations.\footnote{Sensor
readings are themselves usually only stochastically related to 
an underlying (unobserved) state, hence in general we may need to maintain
state over some (Markovian) sufficient statistic summarizing the history
of past sensor readings. We occasionally equate sensor readings and state
only for illustration.}
Let $\bfx_t \in \calx$
denote the system state at time $t$ and $x_{t\ell}$ the value of its
$\ell$-th variable, for $t\in\{0,1,2,\ldots\}$. Likewise, let
$\bfu_t\in\calu$ be the action taken at time $t$ in state
$\bfx_t$. There is an underlying, unknown \emph{transition model}
$P^*(\bfx_{t+1} ~|~ \bfx_{t}, \bfu_t)$, i.e., the distribution from which
the state at the next time step $\bfx_{t+1}$ is generated. The
transition model is stationary and Markovian (i.e., depends only on the
current state $\bfx_t$ and action $\bfu_t$).

We consider time-based \emph{episodes} or \emph{trajectories}
of state-action sequences:
$(\bfx_0, \bfu_0), \ldots, (\bfx_{T-1}, \bfu_{T-1}), (\bfx_T)$.
A length $T$ episodes is typically generated by providing
\emph{initial state} $\bfx_0$ to our agent, which
executes an exploration policy that selects/executes action
$\bfu_t$ in response to observing state $\bfx_t$ (and past history), which
induces $\bfx_{t+1}$ according the true system dynamics, for
$T-1$ stages.
The initial state $\bfx_0$
might be exogeneously realized and may differ across episodes. Our
goal is to ensure safety during an episode.

% an episode of length $T$ consists of
% an \emph{initial state} $\bfx_0$ and a sequence
% which is given to the exploration agent
% %(e.g. agent reads sensor values),
% on which the agent acts with action $\bfu_0$. This results in a new
% state $\bfx_1$, which, again the agent executes with action
% $\bfu_1$. This continues until time $T$. 
% The \emph{trajectory}
% associated with this episode is a sequence of state and action pairs
% where the last action is not required. 
% The initial state $\bfx_0$
% might be exogeneously realized and may differ across episodes. Our
% goal is to ensure safety during an episode.

We use the following notation throughout.
Let $\bfv_{i:j} = \bfv_i,\ldots,\bfv_j$ denote a sequence of $j-i+1$
vectors starting at index $i$. Let $\bfM$ be a matrix. We use $\bfM'$
to denote the matrix transpose. We use $\|\bfM\|_F$ to denote the
Frobenius norm, and for a vector $\bfv$, we use $\|\bfv\|$ to denote
the 2-norm.  We denote by $\bfM[\ell,:]$ the $\ell$-th row vector of
$\bfM$. For two row vectors $\bfv$ and $\bfw$, let
$\begin{bmatrix}\bfv & \bfw\end{bmatrix}$ be the row vector obtained
  by concatenating $\bfv$ and $\bfw$.

\subsection{Linear Gaussian Transition Models and Estimation Bounds}

Our goal is to identify the underlying model $P^*$. We
restrict this model to belong to a class of transition models
$\{P_{\theta} ~|~ \theta\in\m \}$ defined by a parameter space
$\m$. In particular, we assume there is some $\theta^*\in\m$ such
that $P^* = P_{\theta^*}$.  
We assume $\m$ corresponds to parameters of a
\emph{linear-Gaussian model}, though our notion of safety and robust
optimization formulations apply more generally. Specifically, the next
state is given by
\begin{eqnarray*}
  \bfx_{t+1} = \bfA \bfx_t + \bfB \bfu_t + \bfxi_t \quad \forall t\ge 0
\end{eqnarray*}
where $\bfA\in \R^{d\times d}$ and $\bfB\in\R^{d\times d'}$. 
%\cebcomment{Is this next sentence really needed? TL: I'd think so since
%someone may ask where the bias term is encoded}
A constant additive bias vector can be incorporated in $\bfB \bfu_t$ by
adding an extra dimension to $\bfu_t$ with a constant value of $1$. 
We assume a Gaussian random noise vector $\bfxi_t \sim \N({\bf 0},
\sigma^2 \bfI_d)$ where $\bfI_d$ is the $d\times d$ identity
matrix. We also assume $\bfxi_t$ is independent of $\bfxi_{t'}$ for
any $t'\neq t$ and is independent of the previous state $\bfx_{t-1}$.
For ease of exposition,
we omit $\sigma$ from $\m$---it can be easily
estimated as we describe below. 
Thus $\m = \{ \begin{bmatrix}\bfA &\bfB\end{bmatrix} ~|~
  \bfA \in\R^{d\times d}, \bfB\in \R^{d\times d'} \}$. We denote by
  $\theta[\ell,:] = [\bfA[\ell,:]~ \bfB[\ell,:]]$ the $\ell$-th row of
  the concatenated matrix.

Suppose we have existing time series $\left(\bfx_i,
\bfu_i\right)_{i=0}^{n-1}$ data generated by a (possibly randomized)
exploration strategy.  Let $\bfX_n$ be the ($n$-row) data matrix whose
$i$th row is vector $\bfx_{i-1}'$. Suppose, furthermore, that the
exploration strategy
meets the \emph{persistence of excitation} condition
\cite{ljung:1999}. That is, the matrix of auto-covariances of the
actions in the data is positive definite (e.g., independent, uniform
random actions for each $t\ge 0$ satisfy this property). Then the
least-squares estimates $\hat{\theta}=\begin{bmatrix}\hat{\bfA}&
\hat{\bfB}\end{bmatrix}$, where the predicted labels are the next
states, asymptotically converges to the following multivariate
Gaussian distribution (see, e.g., \cite{greene:2018}):
\begin{equation}
\hat{\theta}[\ell, :]\sim \N\left(\theta^*[\ell,:], \frac{\sigma^2 \bfQ^{-1}}{n}\right) \quad\forall \ell\le d,
\label{eq:normal_conv}
\end{equation}
where $\bfQ = \lim_{n\to\infty} \frac{1}{n}\bfX_n'\bfX_n$, which we assume
is positive definite.  Let $\Delta\theta = \theta - \theta^*$ and
$\Delta_\ell\theta = \theta[\ell,:]- \theta^*[\ell,:]$.  We can get an
elliptical confidence bound on our estimates from the Gaussian
distribution in~(\ref{eq:normal_conv}). Specifically, the
distribution over trajectories---induced by the 
randomness in 
linear Gaussian transitions (with respect to $\theta^*$) and any potential
possibly randomness in the exploration strategy---satisfies:
\[ \Pr
\left\{ \left(\bfx_i, \bfu_i\right)_{i=0}^{n-1} ~:~ \Delta_\ell\hat{\theta}'
\frac{n\bfQ}{\sigma^2} \Delta_\ell\hat{\theta} < F^{-1}(1-\alpha)
\right\} \ge 1-\alpha
\]
%\[ P^* \left\{ \bfX_n ~:~ 
%\Delta_\ell\hat{\theta}' \frac{n\bfQ}{\sigma^2}
%\Delta_\ell\hat{\theta} < F^{-1}(1-\alpha) \right\} \ge
%1-\alpha
%\]
where $F^{-1}$ is the inverse CDF of the chi-squared distribution with
$d+d'$ degrees of freedom. Let $\lambda_{\min}(\bfQ)$ be the smallest
eigenvalue of $\bfQ$, which must be positive since $\bfQ$ is positive
definite. We can derive a looser confidence bound using the
Frobenius norm of $\Delta_\ell\hat{\theta}$:
\[ \Pr
\left\{\left(\bfx_i,\bfu_i\right)_{i=0}^{n-1} ~:~ \|\Delta_\ell\hat{\theta}\| < \sigma\sqrt{\frac{F^{-1}(1-\alpha)}{n\lambda_{\min}(\bfQ)}} \right\} \ge 1-\alpha.
\]
To make this bound practical, we can approximate $\bfQ$ and $\sigma$
with:
\[
\hat{\bfQ} = \frac{\bfX_n'\bfX_n}{n}, \qquad
\hat{\sigma} = \sqrt{\frac{\sum_{i=0}^{n-1} \|\bfx_{i+1} - (\hat{\bfA}\bfx_i +\hat{\bfB}\bfu_i) \|^2}{d(n-d-d')}}.
\]
Note that each row of $\hat{\theta}$ converges at a rate of
$O(1/\sqrt{n})$. If we apply a union bound over the $d$ rows of
$\hat{\theta}$, we can get an approximation bound that holds over 
% all the rows.
the entire parameter matrix $\begin{bmatrix}\bfA &\bfB\end{bmatrix}$:
\begin{equation}
\Pr \left\{ \left(\bfx_i,\bfu_i\right)_{i=0}^{n-1} ~:~ \|\Delta\hat{\theta}\|_F <
\sigma \sqrt{\frac{dF^{-1}(1-\frac{\alpha}{d})}{n\lambda_{\min}(\bfQ)}}
\right\} \ge 1-\alpha.
\label{eq:est_bound}
\end{equation}

%  A \emph{cold start policy} is a safe policy that applies a constant action $u$ at each time step.

\section{A Safe Exploration Model}

\label{sec:robust}

We define \emph{safety} as satisfying relevant constraints on the system's
state. This is common, for example, in industrial applications where
certain process variables must be kept in a pre-specified
range. For instance, in data center cooling, the cold-air temperature
(i.e., prior to passing over computing hardware to be cooled) may not be
allowed to exceed some upper bound (safety threshold). This limit could
could be applied to all temperature sensors, the average, or the median or
some other quantile (where aggregations may be defined over various regions
of the DC floor).
%\cebcomment{This sentence could be deleted.}
%Note that this constraint-based formulation differs from recent
%work in the AI literature
%that focuses on total accumulated reward as a safety criterion
%(e.g., \cite{gpc:nips16}).

Let $\pv \subseteq \{1,\ldots, d\}$ be the indices of process variables
to which safety constraints apply. We adopt a straightforward safety model
in which a single
inequality constraint is imposed
on each such process variable, all of which must be
satisfied:
\[
x_\ell \le s_\ell \quad \forall \ell\in\pv.
\]
Let $\spt \in \R^d$ be the vector where $\spt_\ell = s_\ell$ if
$\ell\in\pv$ otherwise $\spt_\ell = U$ for some upper bound $U$ on
$x_\ell$. We use this formulation for ease of exposition and
note that interval constraints on $x_\ell$ can be accommodated by
creating another state variable $-x_\ell$ with the desired bound.

At a high-level, our robust model of safety requires that, for
\emph{any model} $M$ within a specific region of model space where 
the true model resides with high probability, the
exploration actions we take for the purposes of system identifcation
result in satisfying safety conditions according to 
$M$'s dynamics.

\begin{definition}
Let $P^*$ be the true but unknown transition model. Let
$\spt\in \R^d$ be a vector of upper bound constraints on the state
variables. Let $\bfx_0$ be the initial state. We say that a \emph{
  sequence of actions $\bfu_{0:T-1}$ is safe in expectation} if
\begin{align}
  \Exp_{P^*}[\bfx_{\tau}] &\le \spt, \quad \forall \tau\in\{1,\ldots, T\}
\end{align}
where $\bfx_{t+1} \sim P^*(\cdot ~|~ \bfx_t, \bfu_t)$ for all
$t\geq 0$. It is \emph{safe with probability at least $1-\gamma$} if
\begin{align}
  \Pr_{\bfx_{1:T}}(\forall \tau \in \{1,\ldots, T\} \quad \bfx_\tau \le \spt) \geq 1-\gamma \label{eq:safewhp}
\end{align}
\end{definition}
The high probability guarantee is much stronger
than safety in expectation. However, in practice it may be
more difficult computationally to verify safe actions with high
probability; and safety in expectation might suffice if the true
model $P^*$ is close to the mean estimated model
(e.g., when transitions are nearly deterministic).

If we can obtain a high probability bound on each state at time
$\tau$, i.e. $\Pr(\bfx_\tau \le \spt) \geq 1 - \gamma/T$, then
by applying the union bound we can satisfy
inequality Eq.~(\ref{eq:safewhp}). We focus on this simpler safety 
requirement in the sequel. In practice, the union bound can be
quite conservative given state dependencies across time.

For practical exploration, we would like to take different sequences
of actions (e.g., persistently exciting) within a region of action
space that guarantees safety.  We assume there is a \emph{nominal
action} $\sact$ that is known to be safe with high probability when
applied consistently (i.e., at each time step).
\begin{assumption}
\label{ass:safe}
Let $\spt$ be the safety bounds, $T$ the time horizon, and $\gamma >
0$ the safety confidence parameter.  There exists a set of initial
states $\bfX_0 \subseteq \calx$, and a \emph{nominal safe action}
$\sact$ such that for all initial states $\bfx_0 \in \bfX_0$,
\[
\Pr \left( \forall \tau \le T \quad \bfx_\tau\le \spt \right) \ge 1-\gamma,
\]
where $\bfx_{t+1} \sim P^*(\cdot~|~\bfx_t, \sact)$ for all $t\ge 0$.
\end{assumption}
Such safe actions exist in many applications. For instance, in DC cooling
it is generally feasible (though expensive) to set cooling fans and water
flow to very high values (subject to their own constraints) to ensure cold air
temperatures are maintained at or below safety thresholds. It is also the
case that the nominal safe action will eventually drive
``routine'' unsafe states (i.e., those not associated with some significant
system failure) into the safe region.
% . For example, in a radiator
% used for heating, if a room is too cold (unsafe state) one can always open
% the radiator's valves to an appropriate level to heat the room enough (satisfy
% safe threshold).

The nominal safe action is also useful
for identifying the standard deviation parameter $\sigma$ of our
Gaussian noise---we can execute
the safe action for sufficiently long period (in practice,
the action can be safe in steady state).
For simplicity of analysis and exposition, we take
$\hat{\sigma} = \sigma$ to be known. Our results can be extended easily if we
remove this assumption by replacing $\sigma$ with an estimate
using a high-probability upper bound.

\begin{definition}
\label{def:safe-region}
We say that $\calv\subseteq \calu$ \emph{is a safe action region
  in expectation (respectively, with probability at least $1-\gamma$)}
if, for all $\bfu_{0:T-1}\in \calv^T$, $\bfu_{0:T-1}$ is safe in
expectation (respectively, safe with probability at least $1-\gamma$).
\end{definition}
A safe action region $\calv$ represents a region in which we can
explore freely, without having to re-compute whether a chosen sequence
of actions from this region is safe. In practice, we can continuously
select exploratory actions from the safe region.
%\cebcomment{The point in the rest of the para is unclear. TL: updated}
Our safety definition guarantees with high probability that no state
in a trajectory of length $T$ is unsafe. However, if we re-run the
trajectory many times---i.e. run the same sequence of actions 
starting at the same initial state---the probability of an unsafe
event compounds. In practice, however, the additive noise term for the next state is
typically small
and bounded that states are unlikely to be unsafe. This is the
consequence of assuming a Gaussian noise model---if we draw $N$
samples from a Gaussian, for a sufficiently large $N$, we are likely
to draw a very rare event even if its probability is extremely small.

%% While our safety definition guarantees any episode/trajectory of length $T$
%% remains safe, 
%% it doesn't guarantee that any state in, say, $100$ such
%% episodes are safe. The probability of an unsafe event might
%% compound. However, in certain applications the noise is small enough that
%% excursions into unsafe states are rare and transient.

\section{One-step Safety}

While we have defined safety with respect to
constraint satisfaction across an entire trajectory,
we first analyze a simple case, namely,
whether an action is \emph{one-step safe}. Specifically, 
we assess, if we start
in a safe state $\bfx_0$, whether a given action $\bfu$ results in a
safe state $\bfx_1 \sim P_{\theta^*}(\cdot ~|~ \bfx_0, \bfu)$.

Suppose we have training data generated by past exploration, and
that $\|\Delta \hat{\theta}\|_F < \veps$ holds with probability
$1-\alpha$. This ensures that, with probability $1-\alpha$, should we
verify $\bfu$ is safe for all models
$\theta = \begin{bmatrix}\bfA & \bfB\end{bmatrix}$ within $\veps$
of the estimated model $\hat{\theta}$, that $\bfu$ is safe
w.r.t.\ the true model $\theta^*$. This
worst-case, robust analysis
ensures safety even for adversarially chosen $\theta$ within this $\veps$-ball.
% that is different from the true parameters.

We can formulate the an optimization problem to test such
one-step safety. The optimization in Eqs.~(\ref{eq:one1}-\ref{eq:one2}) 
finds the model parameters $\theta$ (i.e., system dynamics $[\bfA, \bfB]$)
within the high-probability $\veps$-ball around $\hat{\theta}$ such that
the safety constraint defined by any $\spt_\ell$ is maximally
violated (if violation is feasible).
Specifically,
for each process variable $\ell\in\pv$, we check whether the optimal
objective value is greater than $\spt_{\ell}$. If no such
value is greater,
then $\bfu$ is one-step safe in expectation.
\begin{align}
  \max_{\bfA, \bfB} & \quad \bfA[\ell,:] \bfx_0 + \bfB[\ell, :]\bfu 
  \label{eq:one1} \\
  \textrm{subject to} & \quad \left\| \begin{bmatrix}\bfA &\bfB\end{bmatrix} -
    \begin{bmatrix}\hat{\bfA} & \hat{\bfB}\end{bmatrix}\right\|_F \le \veps
  \label{eq:one2}
\end{align}

We can reformulate this problem to simplify the optimization.
Let $\bfc =
\frac{1}{\veps}([\bfA[\ell,:] ~\bfB[\ell,:]]' -
     [\hat{\bfA}[\ell,:]~\hat{\bfB}[\ell,:]]')$. 
To check safety in expectation, we can 
re-formulate Eqs.~(\ref{eq:one1}-\ref{eq:one2}) as:
\begin{align*}
  \max_{\bfc : \|\bfc\| \le 1} & \quad
  \left\|
    [\bfx_0' ~ \bfu']
    \bfc \right\| 
\end{align*}
This solution to this problem is the largest singular value of
$\begin{bmatrix}\bfx_0' & \bfu'\end{bmatrix}$, which is its vector $2$-norm, 
and the maximizing
$\bfc^*$ is the corresponding right-singular vector. Thus, the optimal
objective of the original optimization is 
$$\veps \left(\|[\bfx_0'
  ~\bfu']\| + \hat{\bfA}[\ell,:] \bfx_0 + \hat{\bfB}[\ell,:] \bfu
\right).$$ 
The maximizing $\bfA[\ell,:],\bfB[\ell,:]$ of the original
problem can also be easily derived. %TODO explain

%Thus, for one-step safety, we must solve $l$ maximization problems
%corresonding to each process variable and check that each maximum
%objective value does not exceed a pre-defined constraint.

Notice that one-step safety does not imply subsequent states along
the trajectory will be safe, even if the nominal safe action is
taken. In fact, a one-step safe action may lead to a state where there are 
no safe actions.
\begin{example}[One-step safety insufficient]
Define a system whose true dynamics is given by:
\[
\bfA^* =
\begin{bmatrix}
  1/2 & 1\\
  0 & 0
\end{bmatrix},
\quad
\bfB^* = 
\begin{bmatrix}
  0&0\\
  1&0
\end{bmatrix},
\quad
\bfx_0 =
  \begin{bmatrix}
    1/2 & 0
  \end{bmatrix},
\quad
\bfu_0 =
\begin{bmatrix}
  1&0
\end{bmatrix}.
\] 
Suppose our controller/agent knows $\bfA^*, \bfB^*$ and the system 
is noise-free.
Let $\ell = 1$, so the first state variable is the process variable with
safety threshold $s_{1}=1$. Since $\bfA^*$ is in Jordan normal
form, $\rho(\bfA^*)=1/2<1$. When action $\bfu_0$ is applied we obtain
$\bfx_1 = \begin{bmatrix}1/4& 1\end{bmatrix}'$, which is safe. However
for any $\bfu_1=\begin{bmatrix}u_1&u_2\end{bmatrix}'$,
  \[
  \bfx_1 = \bfA^* \bfx_1 + \bfB^* \bfu_1
  = \begin{bmatrix}9/8\\1+u_1\end{bmatrix}
  \]
which is unsafe---this means the first action, while safe, ensures
that no next action is safe. Thus, unless our safety goals are fully
myopic, one-step safety will not suffice.
\end{example}

\section{Trajectory Safety}

Despite having an easily checkable criteria, one-step safety is 
insufficient to guarantee safety over a longer duration. In this section,
we focus on algorithms for verifying trajectory safety for a given sequence of
actions, including running a fixed action over an extended duration.
In particular, we develop an algorithm for finding a maximum ball of
safe actions centered around a known safe action. This allows one to
run, for example, a sequence of randomly chosen actions within the safe
action ball. First, we start off with some basic facts for linear
Gaussian trajectory rollouts.

\subsection{Linear Gaussian Rollouts}
Assuming linear Gaussian transitions, we can rollout the state at
time $\tau\ge 1$ given action sequence $\bfu_{0:\tau-1}$
\begin{align}
\bfx_\tau &= \bfA^\tau \bfx_0 + \sum_{t=0}^{\tau-1} \bfA^t \bfB \bfu_t +
\bfA^t \bfxi_t \\
\Exp[\bfx_{\tau}] &= \bfA^\tau \bfx_0 +
\sum_{t=0}^{\tau-1} \bfA^t \bfB \bfu_t
\end{align}
  As
$\tau\to\infty$, $\Exp[\bfx_\tau]$ might diverge. To ensure
convergence, we need the following \emph{Schur stability} \cite{laselle:1986}
assumption.
\begin{assumption}
Let $\theta = (\bfA, \bfB)$ be the linear Gaussian parameters of
a transition model. We assume the spectral radius of $\bfA$
is less than $1$. That is, all eigenvalues lie in the open unit circle
\[
\rho(\bfA) = \max \{ |\lambda| ~:~ \det(\bfA - \lambda \bfI_d) = 0 \} < 1.
\]
\end{assumption}
$\bfA^\tau$ can grow arbitrarily large if $\rho(\bfA) > 1$, and for
$\rho(\bfA)=1$ it may diverge or converge depending on properties of
$\bfA$. Schur
stability is a reasonable assumption in many
applications since we may be aware of an action that when applied at
each time step eventually converges to a safe steady state. As
$\tau\to\infty$ the effect of the initial state $\bfx_0$ diminishes since
$\bfA^\tau \bfx_0 \to {\bf 0}$.

The variance of the state at time $\tau$ is independent of $\bfu_{0:\tau-1}$ and $\bfx_0$ and can
be derived as
\begin{equation}
\Var[x_{\tau \ell}] = \sigma^2 \sum_{t=0}^{\tau-1} \| \bfA^t[\ell,:] \|^2.
\label{eq:var}
\end{equation}
Let $\bfR^{(\ell)} \in \R^{d\times d}$ with $\bfR^{(\ell)}_{\ell \ell} = 1$ and $0$
else where. We can rewrite the variance as
\begin{align*}
  \Var[x_{\tau \ell}] &= \sigma^2 \left[\sum_{t=0}^{\tau-1} \bfR^{(\ell)} \bfA^t (\bfA^t)' \bfR^{(\ell)}\right]_{\ell \ell} 
\end{align*}
so we have,
\begin{align*}
  \Var[\bfx_\tau ] &= \sigma^2 \sum_{\ell=0}^d\sum_{t=0}^{\tau-1} \bfR^{(\ell)} \bfA^t \bfA'^t \bfR^{(\ell)}\1 \\
  &= \sigma^2 \sum_{\ell=0}^d \bfR^{(\ell)} \left[\sum_{t=0}^{\tau-1}  \bfA^t \bfA'^t\right] \bfR^{(\ell)}\1 \\
  &= \sigma^2 \sum_{\ell=0}^d \bfR^{(\ell)} S(\tau) \bfR^{(\ell)}\1 
\end{align*}
where $S(\tau) = \sum_{t=0}^\tau \bfA^t \bfA'^t$ and $\1$ is the
all-one vector.  With Schur stability, the variance is
bounded:
\begin{theorem}
If $\rho(\bfA)<1$, 
$\lim_{\tau\to\infty}\Var[\bfx_{\tau}]$ exists.
\label{thm:ssvar}
\end{theorem}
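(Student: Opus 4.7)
The plan is to reduce the vector variance limit to scalar series convergence and then invoke a standard decay bound on matrix powers under Schur stability.

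First I would observe that equation~(\ref{eq:var}) expresses each coordinate $\Var[x_{\tau\ell}]$ as a sum of non-negative terms $\sigma^2 \|\bfA^t[\ell,:]\|^2$. Hence the scalar sequence $\Var[x_{\tau\ell}]$ is monotonically non-decreasing in $\tau$, so it either converges to a finite limit or diverges to $+\infty$. Thus it suffices to show $\sum_{t=0}^{\infty} \|\bfA^t[\ell,:]\|^2 < \infty$ for every $\ell$. Since $\Var[\bfx_\tau]$ is the vector whose $\ell$-th coordinate is $\Var[x_{\tau\ell}]$, component-wise convergence yields convergence of the vector-valued limit.

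Next I would invoke the standard consequence of $\rho(\bfA)<1$: by Gelfand's formula, $\lim_{t\to\infty}\|\bfA^t\|_2^{1/t}=\rho(\bfA)$, so picking any $\delta$ with $\rho(\bfA)<\delta<1$, there is a constant $C>0$ (depending on $\bfA$ and $\delta$) such that $\|\bfA^t\|_2 \le C\delta^t$ for all $t\ge 0$. The row-norm bound $\|\bfA^t[\ell,:]\| \le \|\bfA^t\|_F \le \sqrt{d}\,\|\bfA^t\|_2$ then gives
\[
\sum_{t=0}^{\infty} \|\bfA^t[\ell,:]\|^2 \;\le\; d C^2 \sum_{t=0}^{\infty} \delta^{2t} \;=\; \frac{dC^2}{1-\delta^2} \;<\; \infty,
\]
which establishes convergence of $\Var[x_{\tau\ell}]$, hence of $\Var[\bfx_\tau]$.

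The main (minor) obstacle is justifying the geometric decay $\|\bfA^t\|_2 \le C\delta^t$ from $\rho(\bfA)<1$; this is a classical fact that can be proved via Gelfand's formula as above, or alternatively via the Jordan decomposition $\bfA = \bfV\bfJ\bfV^{-1}$ by explicitly bounding powers of each Jordan block (the block $\lambda\bfI+\bfN$ of size $k$ satisfies $\|(\lambda\bfI+\bfN)^t\|_2 \le \binom{t}{k-1}|\lambda|^{t-k+1}$, which is dominated by $\delta^t$ eventually for any $\delta>|\lambda|$). One could alternatively frame the whole argument via the discrete Lyapunov equation: the matrix limit $S_\infty = \sum_{t=0}^{\infty}\bfA^t(\bfA^t)'$ is exactly the unique positive semidefinite solution of $S = \bfA S \bfA' + \bfI_d$, which exists precisely when $\rho(\bfA)<1$, and then $\lim_\tau \Var[\bfx_\tau] = \sigma^2 \sum_\ell \bfR^{(\ell)} S_\infty \bfR^{(\ell)} \1$ follows from the closed-form expression for $\Var[\bfx_\tau]$ given just before the theorem statement.
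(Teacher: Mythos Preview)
Your proposal is correct and follows essentially the same approach as the paper: both arguments invoke Gelfand's formula to obtain geometric decay of $\|\bfA^t\|$ under $\rho(\bfA)<1$, then bound the tail of $\sum_t \|\bfA^t[\ell,:]\|^2$ by a convergent geometric series. The only cosmetic difference is that you use monotonicity plus boundedness of the partial sums, whereas the paper verifies the Cauchy property directly; your framing is slightly cleaner, and your aside on the discrete Lyapunov equation is a nice alternative the paper does not mention.
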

\begin{proof}
To show the limit exists, let $\epsilon > 0$. 
We apply Gelfand's formula \cite{lax:2002} which states 
$\rho(\bfA) =\lim_{k\to\infty} \| \bfA^k \|^{1/k}$ for any matrix
norm $\| \cdot \|$. By Gelfand's, there exists a small enough $\epsilon' > 0$
and a large enough $K$
such that for all $k \ge K$ the following two conditions hold,
\begin{align*}
\| \bfA^k\|_F^{1/k} \le \rho(\bfA) + \epsilon' \quad\textrm{and}\quad 
\frac{(\rho(\bfA) + \epsilon')^k}{1-\rho(\bfA)-\epsilon'} < \epsilon.
\end{align*}
Then we have,
\begin{align*}
\frac{1}{\sigma^2}\Var[x_{k\ell}] &\le \sum_{t=0}^{K-1} \| \bfA^t[\ell,:]\|^2 + \sum_{t=K}^{k}\|\bfA^t\|^2_F \\
&\le \sum_{t=0}^{K-1} \| \bfA^t[\ell,:]\|^2 + \sum_{t=K}^k(\rho(\bfA)+\epsilon')^{2t}\\
&< \sum_{t=0}^{K-1} \| \bfA^t[\ell,:]\|^2 + \frac{(\rho(\bfA)+\epsilon')^{2}}{1-(\rho(\bfA)+\epsilon')^{2}} \\
&< \sum_{t=0}^{K-1} \| \bfA^t[\ell,:]\|^2 + \epsilon \; .
\end{align*}
For any other $k'\ge K$ and $k' < k$, 
\[
\frac{1}{\sigma^2}\Var[x_{k\ell}] \ge \frac{1}{\sigma^2}\Var[x_{k'\ell}] \ge \sum_{t=0}^{K-1} \| \bfA^t[\ell,:]\|^2.
\]
Then we have $\frac{1}{\sigma^2}(\Var[x_{k\ell}] - \Var[x_{k'\ell}]) < \epsilon$, this shows $\Var[x_{\tau\ell}]$ is a Cauchy sequence which always has a limit.
\end{proof}
This result shows that there exist a bound $\beta(\bfA)$ such that
for all $\tau \ge 1$, $\Var[x_{\tau\ell}] < \beta(\bfA)$ so the 
variance does not grow unbounded independently of the executed action sequence.
This is reassuring for high probability safety as
$\bfx_\tau$ converges to a steady state Gaussian distribution with finite
variance. Moreover, we can compute 
high probability upper bounds for any process variable $x_{\tau\ell}$
at any future time step $\tau$. It can be computed by noticing from
Theorem~\ref{thm:ssvar} that $x_{\tau\ell}$  is a Gaussian distribution
with variance at most $\lim_{t\to\infty} \Var[x_{t\ell}]$.
Again, we can compute the expectation and add the
appropriate standard deviation multiplier to get the desired confidence bound.
That is, for the desired $1-\gamma$ confidence, choose 
$c$ such that
\[
\Pr(\bfx_t \le \Exp[\bfx_t] + c\Var[\bfx_t]) \ge 1-\gamma
\]

\subsection{Limiting Case with a Fixed Action}

We first study trajectory safety with respect to a fixed action
$\bfu$ that is applied at each time step and ask whether safety constraints
are satisfied in the steady state distribution.
Obtaining an easily computable ``safety
with high probability'' test is complicated by the
form of the variance term in Eq.~\ref{eq:var}.
For this reason, we address this problem later in
Sec.~\ref{subsec:safe-delta}.  In this section, we begin by
deriving a simpler procedure for determining safety in expectation
for a fixed action.

Recall our earlier assumption that $\rho(\bfA) < 1$. As above, suppose
we have a data set generated by past exporation such that we can
guarantee $\| \Delta \hat{\theta} \|_F \le \veps$ with
probability at least $1-\gamma$. 
Let $\theta = \begin{bmatrix}\bfA &\bfB\end{bmatrix}$ be any
``potential true model'' such that $\| \theta - \hat{\theta}\|\le
\veps$ and $\rho(\bfA)<1$. The expected steady state under $\theta$ is
\begin{align*}
\lim_{\tau\to\infty} \bfA^\tau \bfx_0 + \sum_{t=0}^{\tau-1} \bfA^t \bfB \bfu & = 
  (\bfI-\bfA)^{-1} \bfB \bfu
\end{align*}
Note that $\bfI-\bfA$ is invertible and $\lim_{\tau\to\infty}
\bfA^\tau = \mathbf{0}$ since, by assumption, $\rho(\bfA)<1$.  Recall our robust
optimization objective for safety with respect to the $\ell$-th process
variable,
\begin{align}
\max_{\bfA,\bfB ~:~\rho(\bfA)<1} & \qquad \left[\left((\bfI-\bfA)^{-1}
  \bfB\right) \bfu\right]_\ell \label{eq:traj1} \\ 
  \textrm{subject to} &\qquad
\left\|\begin{bmatrix}\bfA &\bfB\end{bmatrix} -
\begin{bmatrix}\hat{\bfA}&\hat{\bfB}\end{bmatrix}\right\|_F \le \veps \label{eq:traj2}
\end{align}

To simplify notation, we drop subscript $\ell$ from the
objective.
% as it is clear from context. 
We use to same trick as in
one-step safety and optimize for an approximating upper bound. The
objective Eq.~\ref{eq:traj1}
can be modified by subtracting a constant term:
\begin{align}	
\left((\bfI-\bfA)^{-1} \bfB - (\bfI-\hat{\bfA})^{-1}\hat{\bfB}\right)\bfu. \label{eq:ss_obj}
\end{align}
This new formulation Eq.~\ref{eq:ss_obj} is difficult
to optimize, but we can easily maximize an 
objective that upper bounds Eq.~\ref{eq:ss_obj}
by finding an upper bound on the Frobenius norm 
of the terms inside the parentheses.  (Assume all norms below are Frobenius.) 
\begin{align}
\begin{split}
\left\|(\bfI-\bfA)^{-1} \bfB - (\bfI-\hat{\bfA})^{-1}\hat{\bfB}\right\| ={}& \left\| (\bfI-\bfA)^{-1} \bfB - (\bfI-\hat{\bfA})^{-1}
(\bfB + (\hat{\bfB}-\bfB)) \right\|
\end{split}
\nonumber \\
\begin{split}
  \le {}& \left\| (\bfI-\bfA)^{-1} - (\bfI-\hat{\bfA})^{-1}  \right\|\cdot \left\| \bfB \right\| \\
  & +\left\|(\bfI-\hat{\bfA})^{-1} \right\| \cdot \left\| \bfB-\hat{\bfB} \right\|
\end{split}
  \nonumber \\
\begin{split}
  \le {}& \left\| (\bfI-\bfA)^{-1} - (\bfI-\hat{\bfA})^{-1} \right\| \cdot \left[\left\| \hat{\bfB} \right\| +\veps\right] \\
  & +\veps \left\|(\bfI-\hat{\bfA})^{-1} \right\| \label{eq:ss_obj2}
\end{split}
\end{align}

Since $\| \bfA - \hat{\bfA} \| \le \veps$, let $\bfA = \hat{\bfA} -
\bfE$ for $\| \bfE \| \le \veps$. We use the following result for any
non-singular matrix $\bfM$ and small perturbation matrix $\bfE$
\cite{demmel:simat1992}:

\begin{align*}
\frac{\| (\bfM +\bfE)^{-1} - \bfM^{-1} \|}{\|\bfM^{-1}\|} \le \kappa(\bfM) 
  \frac{\|\bfE\|}{\|\bfM\|}
\end{align*}
where $\kappa(\bfM) = \|\bfM \| \cdot \|\bfM^{-1}\|$ is the condition number
of $\bfM$.
By letting $\bfM = \bfI - \hat{\bfA}$, we obtain:
\begin{align}
\| (\bfI-\bfA)^{-1} - (\bfI-\hat{\bfA})^{-1} \| \le 
  \frac{\| (\bfI - \hat{\bfA})^{-1} \|^2 ~\veps}{\|\bfI -\hat{\bfA} \|}.
\label{eq:cond_ineq}
\end{align}
Substituting Ineq.~\ref{eq:cond_ineq} to Ineq.~\ref{eq:ss_obj2} gives:
\begin{multline}
\| (\bfI-\bfA)^{-1} \bfB - (\bfI-\hat{\bfA})^{-1}\hat{\bfB} \| \le \\
  \veps \| (\bfI - \hat{\bfA})^{-1} \| \left[
  \frac{\| (\bfI - \hat{\bfA})^{-1} \|}{\|\bfI -\hat{\bfA} \|} \cdot (\| \hat{\bfB} \| +\veps) +1
  \right] . \label{eq:cond_ineq2}
\end{multline}
Thus we obtain an upper bound the original optimization
Eqs.~(\ref{eq:traj1}-\ref{eq:traj2}) as follows:
\begin{align}
\max_{\bfc} & \qquad \bfc' \bfu + (\bfI -\hat{\bfA})^{-1}\hat{\bfB} \bfu \\
\textrm{s.t.} & \qquad \|\bfc\|\le\textrm{RHS of (\ref{eq:cond_ineq2})} \; .
\end{align}
Using the same trick as in the one-step case, we can solve for the optimal
value by finding the largest singular value. This gives the following upper
bound on the optimal objective value of the original maximization problem:
\begin{multline}
  V^*(\hat{\bfA},\hat{\bfB}, \veps, \bfu) = 
  \veps \| (\bfI - \hat{\bfA})^{-1} \| \left[
  \frac{\| (\bfI - \hat{\bfA})^{-1} \|}{\|\bfI -\hat{\bfA} \|} \cdot (\| \hat{\bfB} \| +\veps) +1 \right]
  \\ \cdot \left( \|\bfu\| +  [(\bfI -\hat{\bfA})^{-1}\hat{\bfB}]_{\ell}\right).
\end{multline}
To check the safety in expectation of the $\ell$-th process variable, in the
limit, when we execute action $\bfu$ at every time step, we simply test
whether $V^*(\hat{\bfA},\hat{\bfB}, \veps, \bfu) > \spt_\ell$.
Note that this upper bound test will generally be overly conservative---
as a consequence, we propose heuristics later in the section that should
be tighter in practice.

%%%%%%%%%%%%%%%%
\subsection{Safety within $\delta$-Ball}
\label{subsec:safe-delta}

Assume, as discussed above,
we have determined that $\bfu$ is a safe action for the next $T$ time steps
under any transition model $\theta$ such that $\| \theta -
\hat{\theta} \|_F\le\veps$ with some probability (again, it's assumed
that $\Delta \| \hat{\theta} \|_F \le \veps$). For purposes
of system identification, we need to explore and not just execute
a fixed action. In this section, we discuss how to
determine whether a \emph{given region of action space} is safe (either
in expectation or with high probability). We do this in a way that exploits
the notion of a fixed safe action discussed above.

Here we focus on
regions of action space defined by a $\delta$-ball of actions,
$\delta>0$ centered at a fixed safe action $\bfu$:
\[
B_\delta(\bfu) = \{ \bfz ~:~ \| \bfz -\bfu\|_2 \le \delta \}.
\]
% Recall Def.~\ref{def:safe-region} for safe region. 
To verify that
region $B_\delta(\bfu)$ is safe
(see Def.~\ref{def:safe-region}),
we must ensure that any sequence of
actions $(\bfu+\bfz)_{0:\tau-1}$ is safe, where $\| \bfz_t - \bfu\|_2 \le
\veps$. As above, we express this problem as an
adversarial optimization for state variable $\ell$
at time $\tau$:
\begin{align}
  \max_{\bfA,\bfB,\bfz_{0:\tau-1}} & \qquad \left[\bfA^\tau\bfx_0 +
    \sum_{t=0}^{\tau-1} \bfA^{t}\bfB \bfz_t\right]_\ell +c\sqrt{\Var[x_{\tau\ell}]} \label{eq:ball_obj} \\
\textrm{subject to} & \qquad \| \bfz_t - \bfu \|\le\delta \quad \forall t \in [\tau] \label{cons:ball1} \\
& \qquad \| \begin{bmatrix}\bfA &\bfB\end{bmatrix} -
  \begin{bmatrix}\hat{\bfA}&\hat{\bfB}\end{bmatrix}
  \|_F \le \veps \label{cons:ball2} \\
& \qquad \rho(\bfA) < 1 \label{cons:ball3}
\end{align}

%\cebcomment{This first sentence is very confusing. Can you rewrite, since
%I'm not sure what the exact intent is here. TL: updated}
We address safety in expectation, i.e., we try to approximately solve
the above optimization problem when $c=0$.
%To test safety with high
%probability, a closed-form analysis is more difficult (with the
%additive standard deviation multiple term) and we refer the reader to
%a computational approach in Sec.~\ref{subsec:compute-largest}. 
Since $\bfu$ is safe, we can upper bound (\ref{eq:ball_obj}) by:
\begin{align}
V^*(\bfA,\bfB,\bfu) + \max_{\bfA,\bfB,\bfz_t} 
   \bfA^\tau \bfx_0 + \sum_{t=1}^\tau \bfA^{t-1}\bfB \bfz_t 
  &\le \\
  V^*(\bfA,\bfB,\bfu) + \max_{\bfA} \bfA^\tau\bfx_0 + \sum_{t=1}^\tau
    \max_{\bfA,\bfB,\bfz_t} \bfA^{t-1}\bfB\bfz_t \label{ineq:fixaction0}
\end{align}
For the max inside the summation, we can again try to bound 
\begin{align}
\| \bfA^{t-1} \bfB - \hat{\bfA}^{t-1} \hat{\bfB} \| \le \| \bfA^{t-1} - \hat{\bfA}^{t-1} \| (\|\hat{\bfB}\|+\veps)
+ \veps \| \hat{\bfA}^{t-1} \|  \label{ineq:fixaction1}
\end{align}
Let $\bfA = \hat{\bfA} - \bfE$ for $\| \bfE \| \le \veps$, we have
\begin{align}
  \| \bfA^{t-1} - \hat{\bfA}^{t-1} \| &= \| \left(\hat{\bfA} - \bfE\right)^{t-1} - \hat{\bfA}^{t-1} \| \nonumber \\
  &= \left\|\left( \sum_{\substack{j_1,\ldots,j_{2t-2}\ge 0 \\j_1 + \cdots + j_{2t-2} = t-1}} \prod_{i=1}^{t-1} \hat{\bfA}^{j_{2i-1}} \bfE^{j_{2i}} \right) -  \hat{\bfA}^{t-1} \right\| \nonumber \\
  &= \left\|\sum_{\substack{j_1,\ldots,j_{2t-2}\ge 0 \\j_1 + \cdots + j_{2t-2} = t-1\\j_2 + j_4 + \ldots + j_{2t-2} > 0}} \prod_{i=1}^{t-1} \hat{\bfA}^{j_{2i-1}} \bfE^{j_{2i}} \right\| \nonumber \\
  &\le \sum_{i=1}^{t-1} \binom{t-1}{i} \|\hat{\bfA} \|^i \veps^{t-i-1} \label{ineq:matrixbinom} \\
  &= \left(\|\hat{\bfA}\| + \veps \right)^{t-1} - \| \hat{\bfA} \|^{t-1} \nonumber
\end{align}
where Ineq.~\ref{ineq:matrixbinom} follows from triangle inequality
and sub-multiplicative property of Frobenius norm. Substituting this
inequality into Ineq.~\ref{ineq:fixaction1}, and then into
Ineq.~\ref{ineq:fixaction0}, gives an upper bound on the objective
(\ref{eq:ball_obj}) as a function of the $\bfz_t$'s. This can
be solved
again by computing the maximum singular value of the resulting matrix.

However, this bound for testing safety in expectation will generally
be quite loose, yielding a conservative test of the safety
of $B_\delta(\bfu)$. To compensate, we instead consider
a direct approximation of the above optimization.
%%%%% OLD STUFF
%% Input: $\veps_{\mathrm{total}}$
%%
%% \begin{align}
%% \max_{\bfA, \bfB, \bfxi_i}  & \sum_{i=1}^k \bfA[v_i, :] \bfx + \bfB[v_i, :] \bfu \\
%% & A \bfx_i + B \bfu_i - \bfy_i \le \bfxi_i \qquad \forall i \in \{1, \ldots, m \} \\
%% & \sum_{i, j} \veps_{ij} \le \veps_{\mathrm{total}} \\
%% & \| A \|_1 \le \tau
%% \end{align}
%%%%%

\subsection{Computing Largest Safe $\delta$-ball}
\label{subsec:compute-largest}

Due to the looseness of the closed-form upper bound above, we can instead
computationally optimize adversarial choices of $\bfA, \bfB, \bfb$ and
$\bfz_t$ that attempt to violate one of the process variables.
We optimize (\ref{eq:ball_obj}-\ref{cons:ball3}) using alternating
optimizing over $\bfA$, $\bfB$ and $\bfz_t$. We start with a
random initialization (e.g., uniform random) of these
parameters. We first optimize $\bfA$ given
$\bfB$ and $\bfz_t$. Note that by Eq.~(\ref{eq:ball_obj}), the value of
each process variable is a simple multi-variate polynomial over the
parameters of $\bfA$, including the additive standard deviation
multiple giving us the desired Gaussian confidence bound. We can
optimize this using gradient ascent (and the resulting
local maxima can be improved using random restarts).
% restart with random initializations to find the best local maxima.

We can then fix $\tilde{\bfA}$, and $\tilde{\bfz_1}, \ldots,
\tilde{\bfz_\tau}$.  Let $\tilde{\bfA}^t_{ij} = \tilde{a}(t)_{ij}$,
then $(\bfB \tilde{\bfz}_t)_i = \sum_{j=1}^m B_{ij}
\tilde{z}_{tj}$. The variance term remains constant, giving:
\begin{align*}
\sum_{t=0}^{\tau-1} \tilde{\bfA}^{t}[\ell, :] \bfB \tilde{\bfz}_t &= \sum_{i=1}^d \tilde{a}(t-1)_{\ell i} \sum_{j=1}^d B_{ij} \tilde{z}_{tj} \\
&= \sum_{t=1}^\tau \sum_{i=1}^d \sum_{j=1}^d B_{ij} \tilde{a}(t-1)_{\ell i} \tilde{z}_{tj} \\
&= \mathrm{vect}(\bfB) \cdot \left( \sum_{t=1}^\tau \tilde{\bfA}^{t-1}[\ell, :]  \otimes \tilde{\bfz}_t \right)
\end{align*}
where $\mathrm{vect}(\cdot)$ is the reshaping of $\bfB$ into vector
form, and $\otimes$ is the tensor product. One can again solve this by
formulating it as a maximum singular value problem.

Finally, suppose $\tilde{\bfA}$ and $\tilde{\bfB}$ are fixed. Each
$\bfz_t$ can be optimized independently in the summation
$\tilde{\bfA}^t[\ell, :] \tilde{\bfB} \bfz_t$ subject to $\| \bfz_t -
\bfu \|_2 \le \delta$. This again can be posed as a maximum singular
value problem.
Thus, for a fixed $\delta$ we can approximate a lower bound on the
optimization, which tells us whether there exists an adversarial
example that makes $\delta$ unsafe. See
Algorithm~\ref{alg:safeball}. Since $\delta$ is a scalar, we can do
a simple ``galloping'' search to find the largest safe
$\delta$. See Algorithm~\ref{alg:maxsafeball}

\begin{algorithm}[th]
\caption{$\safeball$ Approximate Safe Ball}
\label{alg:safeball}
 \begin{algorithmic}[1]
   \REQUIRE $\bfu$, $\delta$, ($\hat{\bfA}, \hat{\bfB}, \veps)$, trajectory length $T$
 \FOR{$\tau=1..T$}
 \STATE Randomly initialize $\bfA$, $\bfB$, $\bfz_{0:\tau-1}$
 \REPEAT
 \STATE $(U, \boldsymbol\Sigma, \bfV) = \svd\left( \sum_{t=1}^\tau \bfA^{t-1}[\ell, :]  \otimes \bfz_t \right)$
 \STATE $\bfB \leftarrow \veps U \bfV[0,:] + \hat{\bfB}$
 \FOR{$t=1..\tau$}
 \STATE $(U', \boldsymbol\Sigma', \bfV') = \svd\left(\bfA^{t-1}[\ell,:]\bfB\right)$
 \STATE $\bfz_{t-1} \leftarrow \delta U' \bfV'[0,:] + \bfu$
 \ENDFOR
 \STATE Update $\bfA$ via gradient ascent on Eq. (\ref{eq:ball_obj}) subject to Ineqs. (\ref{cons:ball2}, \ref{cons:ball3})
 \UNTIL{user defined termination condition}
 \IF{value of Eq. (\ref{eq:ball_obj}) $>\spt_{\ell}$}
 \RETURN FALSE
 \ENDIF
 \ENDFOR
 \RETURN TRUE
\end{algorithmic}
\end{algorithm}

\begin{algorithm}[t]
\caption{$\maxsafeball$ Approximate Maximum Safe Ball}
\label{alg:maxsafeball}
 \begin{algorithmic}[1]
 \REQUIRE ($\hat{\bfA}, \hat{\bfB}, \veps)$, trajectory length $T$, nominal safe action $\sact$, initial $\delta_0$ such that $B_{\delta_0}(\sact)$ is safe, tolerance $\tol$.
 \STATE $l \leftarrow 0, h\leftarrow\delta_0, \delta \leftarrow \delta_0$
 \WHILE{$\safeball(\sact, \delta, \hat{\bfA}, \hat{\bfB}, \veps, T)$}
 \STATE $l \leftarrow \delta, h\leftarrow 2\delta$
 \STATE $\delta\leftarrow 2\delta$
 \ENDWHILE
 \WHILE{$h-l>\tol$}
 \STATE $\delta\leftarrow (h-l)/2$
 \ENDWHILE
 \RETURN $l$
\end{algorithmic}
\end{algorithm}
The $\safeball$ algorithm also allows us to determine whether a given
sequence of actions, $\bfu_{0:T-1}$, is safe. In $\safeball$ we just
need to fix $\bfz_t = \bfu_t$ and update only $\bfA$ and $\bfB$ in
each iteration of optimization.

The use of a single ball may not be ideal to get diversity of training
data. Instead we can generate new fixed, safe actions from
which to build new safe balls using the above algorithms. We
maintain a list of safe balls from which we update its safe radius and
from which we can create new safe balls. For exploration, we may
randomly select a safe ball from the list and run exploration by, say,
randomly choosing actions within the ball. In our experiments, we
elaborate on this method in more detail and illustrate how it improves
sample efficiency with respect to model errors.

%\subsection{Computing many safe balls}

\section{Experiments}

We now describe two sets of experiments designed to test our algorithms for
safely identifying the true underlying model of system dynamics
by exploring within 
% dynamically 
growing safe $\delta$-balls.  
In the first set of  experiments, we test how the largest
safe $\delta$-ball $B_\delta(\sact)$ increases in size as a function of
$\veps$, the model approximation error. In the second, we
compare how quickly the true model can be identified by gathering
exploration data using: (a) a single safe ball; or (b) multiple safe
balls. In both experimental settings, we test for safety in expectation.

We test out methods on the control on fancoil operation on a data 
center floor---fancoil control dictates the amount of airflow over section
of the DC floor, as well as the flow of water (used to cool returning
hot air) through the fancoils themselves. Typically,  very large number
of fancoils are distributed throughout the DC and can have significant
interactions.  In our experiments, we use a model of
fancoil and air temperature dynamics that is
a modified, normalized model of a single fancoil (including
its interactions with other nearby fancoils) derived from
a DC cooling system at Google. 
The control problem is discrete-time system with time steps of
of 30 seconds.  There modified model has five state variables ($d=5$)
actions comprised of two control
inputs ($d'=2$) corresponding to fan speed and water flow. The ``true''
model to be identified in our experiments has a
spectral radius $\rho(\bfA^*)=0.90$. A single state variable (the first)
is the only process variable under control. The nominal
safe action $\sact = (1.833, 1.857)$ is known, based on expert knowledge
of the domain.
We set the trajectory length to be $T=20$, which is a sufficient horizon
for, say, model-predictive control in this model.
%The nominal safe action and trajectory length are common to both experiments. 
We set tolerance $\tol = 0.05$
%\cebcomment{Change tolerance symbol. Don't want
%to use two different versions of epsilon (you use varepsilon for model
%error.}
for $\safeball$'s galloping search.

In the first experiment, we vary the model error parameter $\veps$
and ran the $\maxsafeball$ algorithm to find the largest safe
$\delta$-ball. We set the input parameter estimates $\hat{\theta}
= \begin{bmatrix}\hat{\bfA} &\hat{\bfB}\end{bmatrix}
= \begin{bmatrix}\bfA^* &\bfB^*\end{bmatrix} = \theta^*$. That is, the
  algorithm will optimize for the worst $\theta\in B_{\veps}(\theta^*)$
  that (potentially) violates the safety constraints.
  We use an initial safe
  radius of $\delta_0 = 0.1$ around the safe action $\sact$ at which to
  begin the $\safeball$ galloping search.
 %
%
%  \cebcomment{Add a well-phrased, well-integrated description of the
%  relevant scales of these error magnitudes relative to the ``true''
%  model dynamics. Reader has no idea whether these magnitudes are
%  reasonable.}
We varied $\veps$
from $0$ to $0.0425$ with more granularity near $\veps = 0$. For some
context on the model estimation error $\veps$, we note that
$\| \theta^* \|_F = 1.974$; hence the largest relative error for $\veps = 0.0425$ is $\veps / \| \theta^* \|_F \approx 2.15\%$. Any relative error greater
than $2.15\%$ results in a zero radius safe ball---containing only the
nominal safe action. We vary the relative error from $0\%$ up to $2.15\%$.
At $\veps
= 0$, when the true parameters are known, we get the largest safe 
$\delta$-ball. 
Results are shown in Fig.~\ref{fig:eps_vs_delta}.
The plot shows a non-linear relationship where
the largest computed $\delta$ drops at a faster rate for smaller
$\veps$ compared to larger values of $\veps$. The largest $\delta$
drops to $0.2375$ when we reach $\veps = 0.0425$.
%For any larger
%$\veps$, the largest $\delta = \delta_0 = 0.1$, the nominal 
%safe region.
%

For the second experiment we simulated a typical use case of gradually
identifying the model parameters while running safe exploration to
generate training data. In particular, we compare the sample
efficiency of running exploration on one dynamically growing safe ball
versus many, simultaneously, dynamically growing safe balls.
In the true model we add Gaussian noise with standard deviation
$\sigma = 0.01$. We set a fixed constraint on the process variable for
both the single ball and the multiple ball runs. We run $100$ episodes
each with trajectory length of $T=20$ before updating $\hat{\theta}$
and the safe balls (i.e., we update every $2000$ training examples).
Each episode starts with the same initial state. In practice, this
can be accomplished by running the nominal safe action in between
episodes until steady state is reached (driving a system to a
fixed state is a common assumption, see e.g. \cite{am:icml2012}).

When generating multiple safe balls after every $100$ episodes, we
randomly choose $2$ actions $\bfu_1, \bfu_2$ on the surface of a
randomly chosen safe ball (from an existing set of safe balls). We use
$\maxsafeball$ to find the largest $\delta_1, \delta_2$, respectively,
around those chosen actions, with $\delta_0 = 0.1$ and $\veps =
\veps_{\textrm{t}}$ the theoretical model error as given by the error
approximation bound of Eq.~\ref{eq:est_bound} with confidence fixed at
$1-\alpha=0.95$. If the largest $\delta$ computed exceeds a minimum
lower bound (we choose $0.2$) we include the new ball in our list of
safe balls. We limit the number of safe balls at $14$ to simplify our
computations and reduce clutter in our presentation. We generate
exploration actions by randomly choosing a safe ball and uniformly at
random drawing a sequence of exploration actions within that ball.

Fig.~\ref{fig:n_vs_eps} contrasts the reduction in theoretical model error
$\veps_{\textrm{t}}$ versus actual model error $\veps_{\textrm{a}} = \|
\hat{\theta} - \theta^* \|$ under the two exploration
regimes. It shows that exploration that generates actions from multiple
safe balls is generally more sample efficient (i.e., it achieves the same
model error with fewer examples). For instance, after $1000$ episodes
($n=20000$ examples) multi-ball exploration achieves
$\veps_{\textrm{t}} = 0.0224$ and $\veps_{\textrm{a}} = 0.0045$ while
exploration with a single ball achieves $\veps_{\textrm{t}} = 0.0305$
and $\veps_{\textrm{a}} = 0.0075$, which is considerably worse 
%\cebcomment{Is this statitically significant? If not use a different term,
%e.g., considerably.}.
As $n$ increases, both theoretical and actual model error decrease at
a slower rate (as expected). Moreover, the gap in sample efficiency
between the two regimes gets smaller at $n$ grows.

We note that
theoretical model error is conservative estimate of the
actual model error. Furthermore, actual model error is a conservative
measure when using the model to optimize for a good control policy,
which does not necessarily require a fully accurate model.  In terms
of exploration time, $n=20000$ examples corresponds to $166$ hours or
about $6$ days and $22$ hours on a single fancoil. However, ``practical''
identification will proceed much more quickly for two reasons. First,
as discussed above, a data center has many fancoils, this
exploration time can be reduced
significantly
by exploring across multiple fancoils simultaneously. Second,
we generally do \emph{not} require accurate system identification
across the full range of dynamics: for example,
once model confidence is sufficient for specific regions of state-control
space such that these points can be shown to never (or ``rarely'') be
reached under optimal control, further model refinement in unneccesary (i.e.,
the value of information for more precise identification is low).

Fig.~\ref{fig:many_balls_cmp} shows the safe ball(s) after being
updated for various values of $n$ (i.e. immediately after $n$ examples
have been generated) for the two regimes. One can see that the area of
action space covered when exploring with many safe balls grows more
quickly than with just one ball. In fact, the original safe ball
around $\sact$ is larger in the many balls regime due to the smaller
$\veps_{\textrm{t}}$ that results from better sample efficiency. For
values of $n \ge 50000$ the balls in both regimes do not grow as
quickly, this corresponds to the model errors starting to decline at a
slower rate.

\begin{figure}
\centering
\includegraphics[scale=.55]{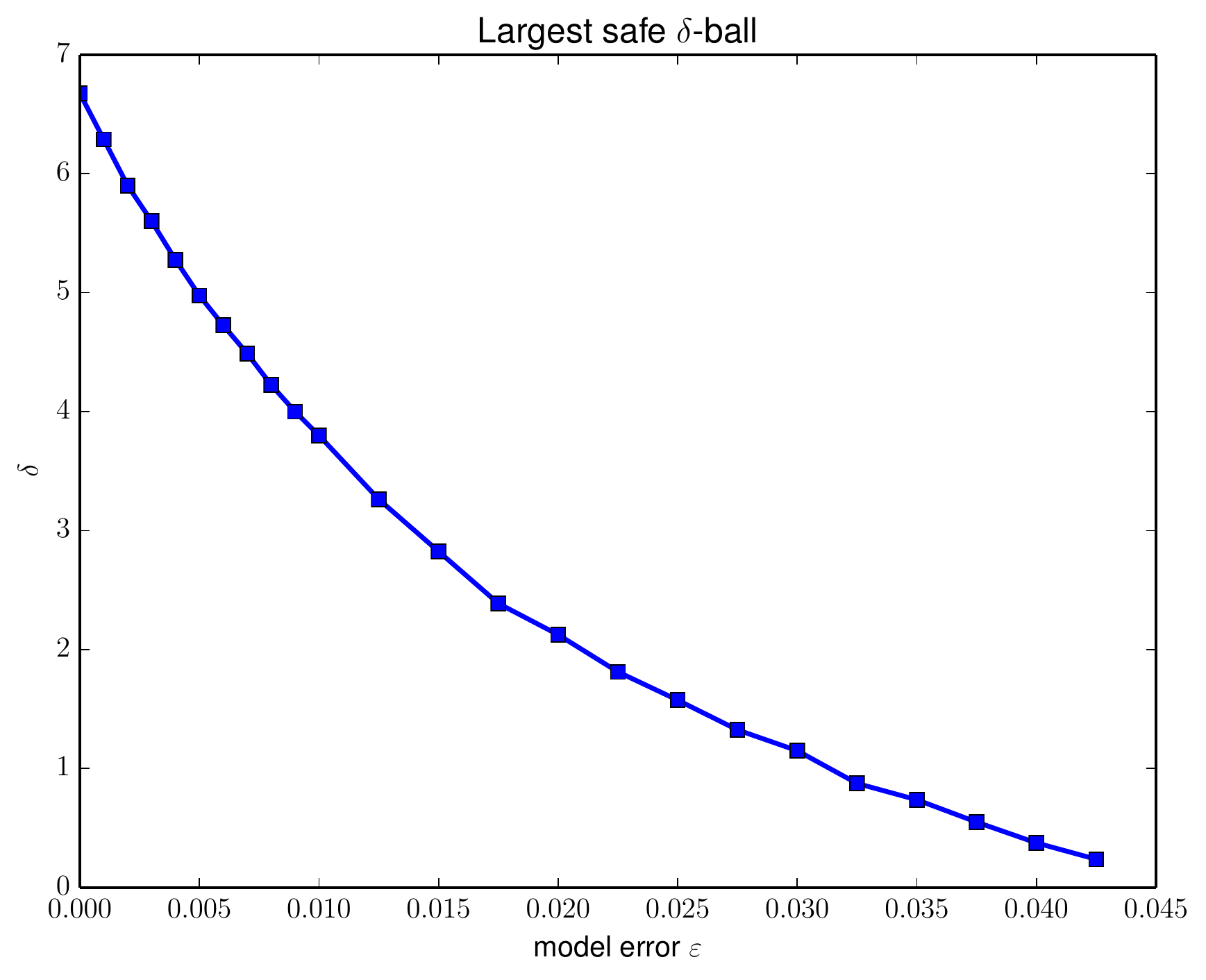}
%\vspace*{-3mm}
\footnotesize{\caption{The largest safe ball around the nominal safe action $\sact$ as approximated by $\maxsafeball$ for various input model errors $\veps$.}
\label{fig:eps_vs_delta}}
%s\vspace*{-3mm}
\end{figure}

\begin{figure}
\centering \includegraphics[scale=.4]{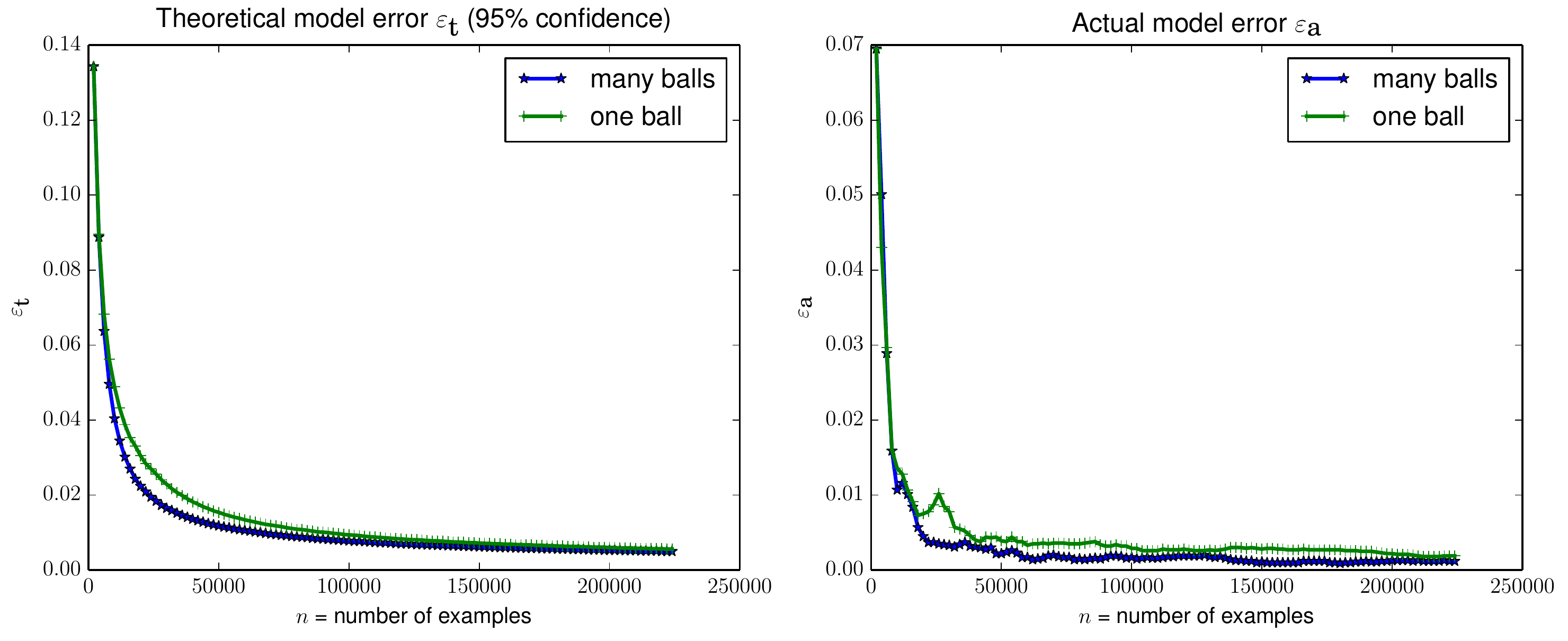}
\footnotesize{\caption{Comparing the theoretical and actual model
    errors under two safe exploration regimes: (1) random actions
    chosen from up to 14 safe balls or (2) random actions chosen from
    a single ball.}
    \label{fig:n_vs_eps}}
\end{figure}
%\begin{figure}
%    \centering
%    \subfigure{
%      \includegraphics[scale=.36]{sim_theoretical_eps.pdf}
%    }
%    \subfigure{
%      \includegraphics[scale=.36]{sim_actual_eps.pdf}
%    }
%    \caption{bla \label{fig:n_vs_eps}}
%\end{figure}

\begin{figure}
\centering
\includegraphics[scale=.4]{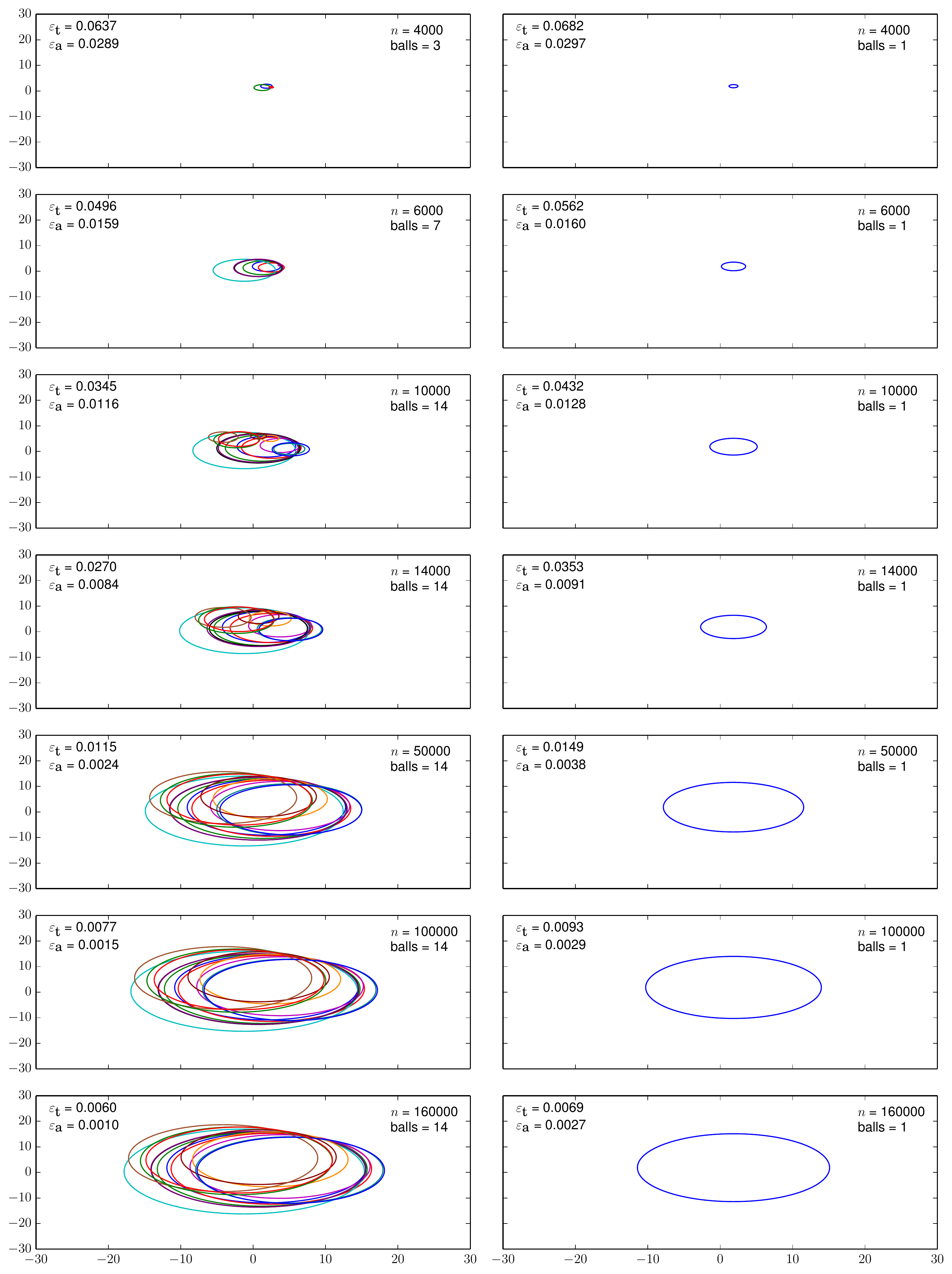}
%\vspace*{-3mm}
\footnotesize{\caption{%Growth in the safe balls under the two exploration regimes
    Left column plots show the gradual addition of
safe balls (i.e., circles, up to $14$) and growth of each ball. Each plot has $x$ and $y$ axis
corresponding to the two control inputs. Each circle within a plot 
indicates a safe action region where any sequence of $T$ actions 
is safe. Plots on the right shows gradual growth of a single safe ball.}
    \label{fig:many_balls_cmp}}
%s\vspace*{-3mm}
\end{figure}

%% \begin{figure}
%% \centering
%% \includegraphics[scale=.55]{sim_theoretical_eps.pdf}
%% %\vspace*{-3mm}
%% \footnotesize{\caption{Theoretical eps}
%% \label{fig:theoretical_eps}}
%% %s\vspace*{-3mm}
%% \end{figure}

%% \begin{figure}
%% \centering
%% \includegraphics[scale=.55]{
%% %\vspace*{-3mm}
%% \footnotesize{\caption{Theoretical eps}
%% \label{fig:actual_eps}}
%% %s\vspace*{-3mm}
%% \end{figure}

\section{Concluding Remarks}

%\cebcomment{This should probably be beefed up. Will take a look next pass.}

We studied safe exploration for identifying system parameters of a
linear Gaussian model. In our approach we defined safety as satisfying
linear inequality constraints on the state at each time step of a
trajectory. This is motivated by applications involving control of
process variables and where violating those constraints may lead to
significant negative impacts. We showed how to compute many regions of
safe actions when given a nominal safe action. Confined to taking
actions within each safe region, any exploration strategy may be
applied and theoretical bounds on the approximation quality of an
estimated model can be computed online. By computing and exploring within
many safe regions, our experiments show we can increase the
sample-efficiency of exploration. For future work, we would like to
extend safe exploration to not just identify the model as accurately
as possible, but to gather just the right kind of transition samples
that optimizes the quality of the controller.

\subsubsection*{Acknowledgments}
We like to thank Nevena Lazic, Eehern Wang and Greg Imwalle for useful discussions.

\bibliographystyle{plain}
\bibliography{long,standard}

\end{document}